\newcommand{\method}{\texttt{MoTiAC}\xspace}
\newcommand{\calI}{\mathcal{I}}
\newcommand{\calA}{\mathcal{A}}
\begin{document}
\newcommand*\samethanks[1][\value{footnote}]{\footnotemark[#1]}
\title{Multi-Objective Actor-Critics for Real-Time Bidding in Display Advertising\thanks{This work was supported by the National Key R\&D Program of China [2020YFB1707903]; the National Natural Science Foundation of China [61872238, 61972254]; Shanghai Municipal Science and Technology Major Project [2021SHZDZX0102]; the CCF-Tencent Open Fund [RAGR20200105] and the Tencent Marketing Solution Rhino-Bird Focused Research Program [FR202001]. Chaoqi Yang completed this work when he was an undergraduate student at Shanghai Jiao Tong University. Xiaofeng Gao is the corresponding author.}}
\titlerunning{Multi-Objective Actor-Critics for RTB in Display Advertising} 
\toctitle{Multi-Objective Actor-Critics for RTB in Display Advertising}

%

\author{Haolin Zhou\inst{1}\thanks{Equal contribution.} \and
	Chaoqi Yang\inst{2}\samethanks \and
	Xiaofeng Gao\inst{1}(\Letter) \and Qiong Chen\inst{3} \and Gongshen Liu\inst{4} \and Guihai Chen\inst{1}}

\institute{MoE Key Lab of Artificial Intelligence,\\
	Department of Computer Science and Engineering,\\
	Shanghai Jiao Tong University, Shanghai, China\\
	\and
	University of Illinois at Urbana-Champaign, Illinois, USA\\
	\and
	Tencent Ads, Beijing, China \\
	\and 
	 School of Electronic Information and Electrical Engineering,\\
	 Shanghai Jiao Tong University, Shanghai, China\\
	\email{\{koziello, lgshen\}@sjtu.edu.cn}, \email{	chaoqiy2@illinois.edu}\\
	\email{\{gao-xf, gchen\}@cs.sjtu.edu.cn}, \email{evechen@tencent.com}
}

\authorrunning{H. Zhou, C. Yang, X. Gao, Q. Chen, G. Liu, and G. Chen}
\tocauthor{Haolin~Zhou, Chaoqi~Yang, Xiaofeng~Gao, Gongsheng~Liu, Guihai~Chen}
%
%
%
\maketitle              
\begin{abstract}
  Online Real-Time Bidding (RTB) is a complex auction game among which advertisers struggle to bid for ad impressions when a user request occurs. Considering display cost, Return on Investment (ROI), and other influential Key Performance Indicators (KPIs), large ad platforms try to balance the trade-off among various goals in dynamics. To address the challenge, we propose a \textbf{M}ulti-\textbf{O}bjec\textbf{Ti}ve \textbf{A}ctor-\textbf{C}ritics algorithm based on reinforcement learning (RL), named \method, for the problem of bidding optimization with various goals. In \method, objective-specific agents update the global network asynchronously with different goals and perspectives, leading to a robust bidding policy. Unlike previous RL models, the proposed \method can simultaneously fulfill multi-objective tasks in complicated bidding environments. In addition, we mathematically prove that our model will converge to Pareto optimality. Finally, experiments on a large-scale real-world commercial dataset from Tencent verify the effectiveness of \method versus a set of recent approaches.
\end{abstract}

\keywords{Real-time Bidding \and Reinforcement Learning \and Display Advertising \and Multiple Objectives} 

\section{Introduction}
The rapid development of the Internet and smart devices has created a decent
environment for the advertisement industry. As a result, real-time bidding (RTB) has gained continuous attention in the past few decades~\cite{yuan2013real}. A typical RTB setup consists of publishers, supply-side platforms (SSP), data management platforms (DMP), ad exchange (ADX), and demand-side platforms (DSP). When an online browsing activity triggers an ad request in one bidding round, the SSP sends this request to the DSP through the ADX, where eligible ads compete for the impression. The bidding agent, DSP, represents advertisers to come up with an optimal bid and transmits the bid back to the ADX (e.g., usually within less than 100ms~\cite{yuan2013real}), where the winner is selected to be displayed and charged by a generalized second price (GSP).

In the RTB system, \emph{bidding optimization} in DSP is regarded as the most critical problem~\cite{zhang2014optimal}. Unlike Sponsored Search (SS)~\cite{zhao2018deep}, where advertisers make keyword-level bidding decisions, DSP in the RTB setting needs to calculate the optimal impression-level bidding under the basis of user/customer data (e.g., income, occupation, purchase behavior, gender, etc.), target ad (e.g., content, click history, budget plan, etc.) and auction context (e.g., bidding history, time, etc.) in every single auction~\cite{zhang2014optimal}.

Thus, our work focuses on DSP, where \emph{bidding optimization} happens. In real-time bidding, two fundamental challenges need to be addressed. Firstly, the RTB environment is highly dynamic. In~\cite{wang2017ladder,zhang2014optimal,zhu2017optimized}, researchers make a strong assumption that the bidding process is stationary over time. However, the sequence of user queries (e.g., incurring impressions, clicks, or conversions) is time-dependent and mostly unpredictable~\cite{zhao2018deep}, where the outcome influences the next auction round. Traditional algorithms usually learn an independent predictor and conduct fixed optimization that amounts to a greedy strategy, often not leading to the optimal return~\cite{cai2017real}. Agents with reinforcement learning (RL) address the aforementioned challenge to some extent~\cite{jin2018real,lu,zhao2018deep}. RL-based methods can alleviate the instability by learning from immediate feedback and long-term reward. However, these methods are limited to either \emph{Revenue} or \emph{ROI}, which is only one part of the overall utility. In the problem of RTB, we assume that the utility is two-fold, as outlined: (i) the cumulative cost should be kept within the budget; (ii) the overall revenue should be maximized. Therefore, the second challenge is that the real-world RTB industry needs to consider multiple objectives, which are not adequately addressed in the existing literature.

To address the challenges mentioned above, we propose a \emph{Multi-Objective Actor-Critic} model, named \method. We generalize the popular asynchronous advantage actor-critic (A3C)~\cite{mnih2016asynchronous} reinforcement learning algorithm for multiple objectives in the RTB setting. Our model employs several local actor-critic networks with different objectives to interact with the same environment and then updates the global network asynchronously according to different reward signals. Instead of using a fixed linear combination of different objectives, \method can decide on adaptive weights over time according to how well the current situation conforms with the agent's prior. We evaluate our model on click data collected from the Tencent ad bidding system. The experimental results verify the effectiveness of our approach versus a set of baselines.

The contributions in this paper can be summarized as follows: 
\begin{itemize}
	\item We identify two critical challenges in RTB and are well motivated to use multi-objective RL as the solution.
	\item We propose a novel multi-objective actor-critic model \method for optimal bidding and prove the superiority of our model from the perspective of Pareto optimality.
	\item Extensive experiments on a real industrial dataset collected from the Tencent ad system show that \method achieves state-of-the-art performance. 
\end{itemize}

\section{Preliminaries}
\subsection{Definition of oCPA and Bidding Process} \label{sec:pd}
In the online advertising scenario, there are three main ways of pricing. Cost-per-mille (CPM)~\cite{jin2018real} is the first standard, where revenue is proportional to \emph{impression}. Cost-per-click (CPC)~\cite{zhang2014optimal} is a performance-based model, i.e., only when users \emph{click} the ad can the platform get paid. In the cost-per-acquisition (CPA) model, the payment is attached to each \emph{conversion} event. Regardless of the pricing ways, ad platforms always try to maximize revenue while simultaneously maintaining the overall cost within the budget predefined by advertisers.

In this work, we focus on one pricing model that is currently used in Tencent online ad bidding systems, called optimized cost-per-acquisition (oCPA), in which \textbf{advertisers are supposed to set a target CPA price, denoted by $\text{CPA}_{\text{target}}$ for each conversion while the charge is based on each click}. The critical point for the bidding system is to make an optimal strategy to allocate overall impressions among ads properly, such that (i) the real click-based cost is close to the estimated cost calculated from $\text{CPA}_{\text{target}}$, specifically,
\begin{equation}
	\text{\#clicks}\times \text{CPC}_\text{next} \approx \text{\#conversions}\times \text{CPA}_{\text{target}},
	\label{eq:eq}
\end{equation}
where $\text{CPC}_\text{next}$ is the cost charged by the second highest price and $\text{CPA}_{\text{target}}$ is pre-defined for each conversion; (ii) more overall conversions. In the system, the goal of our bidding agent is to generate an optimal $\textbf{CPC}_{\textbf{bid}}$ price, adjusting the winner of each impression. We denote $\calI=\{1, 2, ..., n\}$ as bidding iterations, $\calA=\{ad_1, ad_2, ...\}$ as a set of all advertisements. For each iteration $i\in \calI$, $ad_j\in \calA$, our bidding agent will decide on a $\text{CPC}_{\text{bid}}^{(i,j)}$ to play auction. Then the ad with the highest $\text{CPC}_{\text{bid}}^{(i,j)}$ wins the impression and then receives possible $\text{\#clicks}^{(i,j)}$ (charged by $\text{CPC}_{\text{next}}^{(i,j)}$ per click) and $\text{\#conversions}^{(i,j)}$ based on user engagements. 

\subsection{Optimization Goals in Real-Time Bidding}\label{sec:aac}

On the one hand, when $\text{CPC}_{\text{bid}}$ is set higher, ads are more likely to win this impression to get clicks or later conversions, and vice versa. However, on the other hand, higher $\text{CPC}_{\text{bid}}$ means lower opportunities for other ad impressions. Therefore, to determine the appropriate bidding price, we define the two optimization objectives as follows:

\subsubsection{Objective 1: Minimize overall CPA.}
The first objective in RTB bidding problem is to allocate impression-level bids in every auction round, so that each ad will get reasonble opportunities for display and later get clicks or conversions, which makes $\textbf{CPA}_\textbf{real}$ close to $\textbf{CPA}_\textbf{target}$ pre-defined by the advertisers:
\begin{equation}
	\label{o1}
	\textbf{CPA}_\textbf{real}^{(j)} = \frac{\sum_{i\in \calI}\text{\#clicks}^{(i,j)}\times \text{CPC}_\text{next}^{(i,j)}}{\sum_{i\in \calI}\text{\#conversions}^{(i,j)}}
	, ~~~ \forall ad_j \in \calA.
\end{equation}
To achieve the goal of minimizing overall CPA, i.e., be in line with the original budget, a lower ratio between $\textbf{CPA}_\textbf{real}^{(j)}$ and $\textbf{CPA}_\textbf{target}^{(j)}$ is desired. Precisely, when the ratio is smaller than 1, the agent will receive a positive feedback. On the contrary, when the ratio is greater than 1, it means that the actual expenditure exceeds the budget and the agent will be punished by a negative reward. 

\subsubsection{Objective 2: Maximize conversions.}
The second objective is to enlarge conversions as much as possible under the condition of a reasonable $\text{CPA}_\text{real}$, so that platform can stay competitive and run a sustainable business:
\begin{equation}
	\label{o2}
	\textbf{\#conversions}^{(j)} = \sum_{i\in \calI}\text{\#conversions}^{(i,j)}
	, ~~~ \forall ad_j \in \calA,
\end{equation}
where $\textbf{\#conversions}^{(j)}$ is a cumulative value until the current bidding auction. Obviously, relatively high \#conversions will receive a positive reward. When the policy network gives fewer conversions, the agent will be punished with a negative reward.  

Note that in the real setting, optimization objectives used by advertising platforms can be adjusted based on actual business needs. In the implementation and evaluation of \method, we use \textbf{ROI} (Return on Investment) and \textbf{Revenue}, corresponding to the two objectives for optimization, i.e., minimizing overall CPA and maximizing the number of conversions. Their definition will be detailed in Sec.~\ref{exp:setup}.

\section{Methodology}
As shown in Sec.~\ref{sec:pd}, the RTB problem is a multi-objective optimization problem. We need to control advertisers' budgets and make profitable decisions for the ad platform. Traditional RTB control policy or RL agent can hardly handle these challenges. In this work, we design \method to decouple the training procedure of multiple objectives into disentangled worker groups of actor-critics. We will elaborate on the technical details of \method in the following subsections. 

\subsection{Asynchronous Advantage Actor-Critic Model in RTB}\label{sec:ac}
An actor-critic reinforcement learning setting~\cite{konda2000actor} in our RTB scenario consists of:
\begin{itemize}
	\item \textbf{state $s$:} each state is composed of anonymous feature embeddings extracted from the user profile and bidding environment, indicating the current bidding state.
	\item \textbf{action $a$:} action is defined as the bidding price for each ad based on the input state. Instead of using discrete action space~\cite{wang2017ladder}, our model outputs a distribution so that action can be sampled based on probability.
	\item \textbf{reward $r$:} obviously, the reward is a feedback signal from the environment to evaluate how good the previous action is, which guides the RL agent towards a better policy. In our model, we design multiple rewards based on different optimization goals. Each actor-critic worker group deals with one type of reward from the environment and later achieves multiple objectives together.
	\item \textbf{policy $\pi_{\theta}(\cdot)$:} policy is represented as $\pi_{\theta}(a_t|s_t)$, which denotes the probability to take action $a_t$ under state $s_t$. In an actor-critic thread, actor works as a policy network, and critic stands for value function $V(s; \theta_v)$ of each state. The parameters are updated according to the experience reward obtained during the training process. 
\end{itemize}
\begin{figure}[t]
	\centering
	\captionsetup{labelfont=bf}
	\includegraphics[width=1\textwidth]{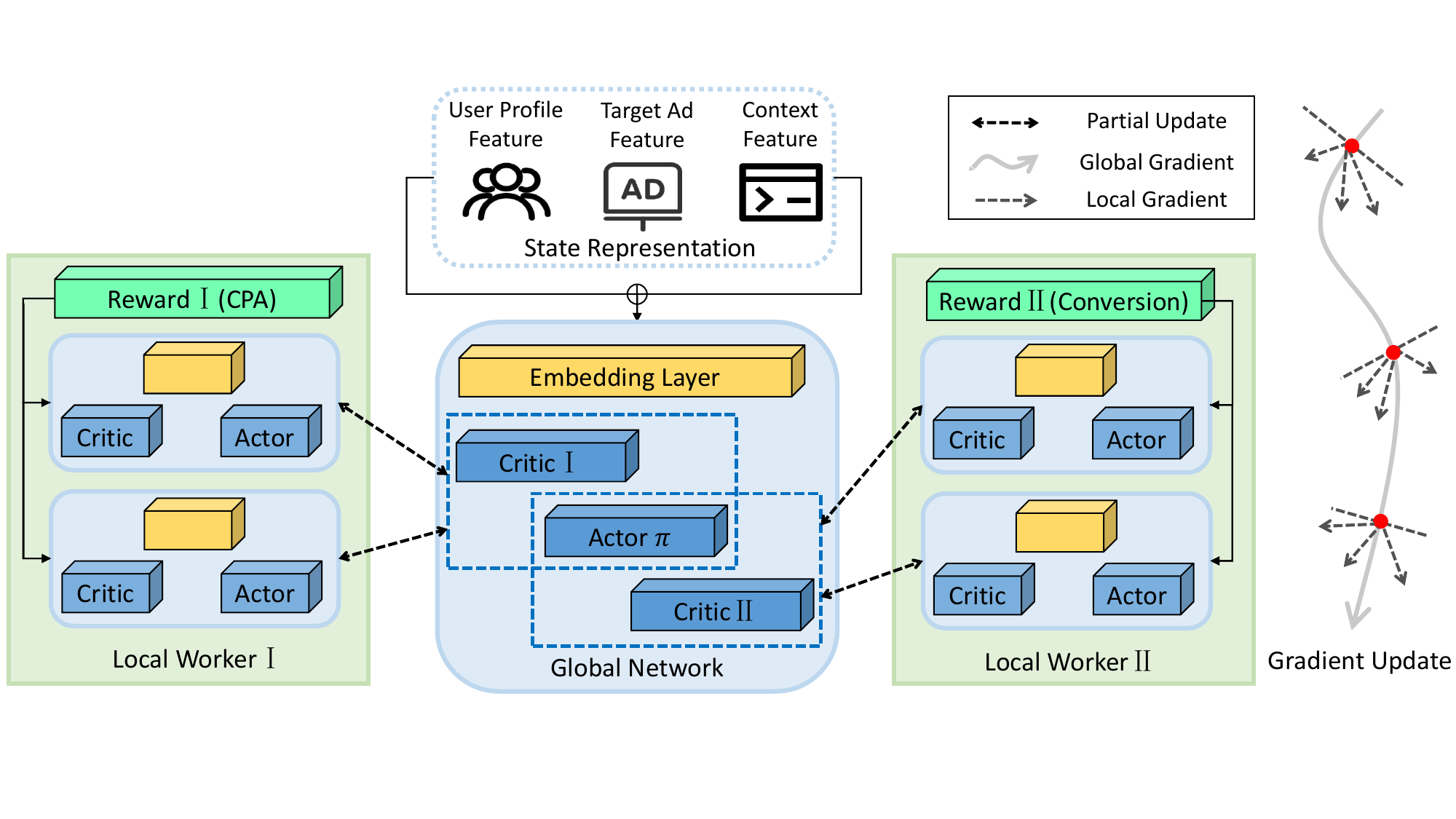}
	\caption{Framework of the proposed \method in RTB.}
	\label{fig:MoTiAC}
\end{figure}

For each policy $\pi_{\theta}$, we define the utility function as
\begin{equation}
U(\pi_{\theta}) = E_{\tau\sim{p_{{\theta}}(\tau)}} [R(\tau)],
\end{equation}
where $p_{\theta}(\tau)$ denotes the distribution of trajectories under policy $\pi_{\theta}$, and $R(\tau)$ is a return function over trajectory $\tau$, calculated by summing all the reward signals in the trajectory. The utility function is used to evaluate the quality of an action taken in a specific state. We also introduce value function from critic to reduce the varaition that may occur when updating parameters in real time. After collecting a number of tuples $(s_t,a_t,r_t)$ from each trajectory $\tau$, the policy network $\pi_{\theta}(\cdot)$ is updated by
\begin{align}\label{eq:sgdv}
	\theta &\leftarrow \theta + \eta_{actor}\sum_{t=1}^{T}(R(s_t)-V(s_t))\nabla_{\theta}\log\pi_{\theta}(a_t\mid s_t),
\end{align}
where $\eta_{actor}$ represents the learning rate of policy network, $T$ is a preset maximum step size in a trajectory, $R(s_t)=\sum_{n=t}^{T}\gamma^{n-t}r_n$ denotes the cumulative discounted reward, and $\gamma$ is a decaying factor. The critic network, $V(s; \theta_v)$, could also be updated by:
\begin{align}\label{eq:sgdv2}
	\theta_v \rightarrow \theta_v+\eta_{critic}\frac{\partial\left(R_{t}-V_{\theta_v}\left(s_{t}\right)\right)^{2}}{\partial \theta_v}, 
\end{align}
where $\eta_{critic}$ represents the learning rate of value function. 

\subsection{Adaptive Reward Partition}\label{sub:reward_partition}
In this subsection, we consider the general $K$-objective case, where $K$ is the total number of objectives. As stated in Sec.~\ref{sec:aac}, multiple objectives should be considered in modeling the RTB problem. One intuitive way~\cite{pasunuru2018multi} of handling multiple objectives is to integrate them into a single reward function linearly, and we call it \emph{Reward Combination}: (i) A linear combination of rewards is firstly computed, where $w_k$ quantifies the relative importance of the corresponding objective $R_k(\cdot)$:
\begin{equation}
	R(s)=\sum_{k=1}^K w_k\times R_k(s).
\end{equation}
(ii) A single-objective agent is then defined with the expected return equal to value function $V(s)$. However, a weighted combination is only valid when objectives do not compete~\cite{sener2018multi}. In the RTB setting, the relationship between objectives can be complicated, and they usually conflict on different sides. The intuitive combination might flatten the gradient for each objective, and thus the agent is likely to limit itself within a narrow boundary of search space. Besides, a predefined combination may not be flexible in the dynamic bidding environment. Overall, such a \emph{Reward Combination} method is unstable and inappropriate for the RTB problem, as we will show in the experiments.

\subsubsection{Reward Partition.}
We now propose the \emph{Reward Partition} scheme in \method. Specifically, we design reward for each objective and employ one group of actor-critic networks with the corresponding reward. There is one global network with an actor and multiple critics in our model. At the start of one iteration, each local network copies parameters from a global network. Afterward, local networks from each group will begin to explore based on their objective and apply weighted gradients to the actor and one of the critics (partial update) in the global network asynchronously, as shown in Fig.~\ref{fig:MoTiAC}. Formally, we denote the total utility and value function of the $k^{th}$ group ($k=1,\cdots,K$) as $U^k(\pi_{\theta})$ and $V_k(s;\theta_{v})$, respectively. Different from the original Eqn.~\eqref{eq:sgdv}, the parameter updating of policy network in one actor-critic group of \method is formulated as
\begin{align}\label{eq:sgd2}
\theta & \leftarrow \theta + \eta_{actor} w_k\sum_{t=1}^{T} (R_k(s_t)-V_k(s_t))\nabla_{\theta}\log\pi_{\theta}(a_t\mid s_t),
\end{align}
where $w_k$ is an objective-aware customized weight for optimization in range (0,1) and is tailored for each $ad_{j}\in A$. We can simply set $w_{k}$ as 
\begin{equation}
	w_k = \frac{R_{k}(s_t)-V_{k}(s_t)}{\sum_{l=1}^K (R_{l}(s_t)-V_l(s_t))}, 
\end{equation}
while dynamically adjusting the value of $w_{k}$ by giving higher learning weights to the local network that contributes more to the total reward. Motivated by Bayesian RL \cite{ghavamzadeh2015bayesian}, we can generalize the customized weight and parameterize $w_k$ by introducing a latent multinomial variable $\phi$ with $w_k = p(\phi=k|\tau)$ under trajectory $\tau$, named as \emph{agent's prior}. We set the initial prior as
\begin{equation}
p(\phi=k|\tau_0)=\frac1K,~~~\forall~k = 1, 2, \dots, K, 
\end{equation}
where $\tau_0$ indicates that the trajectory just begins. When $\tau_t$ is up to state $s_t$, i.e., $\tau_t=\left\{s_{1}, a_{1}, r_{1}, s_{2}, a_{2}, r_{2}, \ldots s_{t}\right\}$, we update the posterior by
\begin{align}\label{eq:phi}
p(\phi=k|\tau_t) = \frac{p(\tau_t|\phi=k)p(\phi=k)}{\sum_k p(\tau_t|\phi=k)p(\phi=k)},
\end{align}
where $p(\tau_t|\phi=k)$ tells how well the current trajectory agrees with the utility of objective $k$. Based on priority factor $w_k$, together with the strategy of running different exploration policies in different groups of workers, the overall changes being made to the global actor parameters $\theta$ are likely to be less correlated and more objective-specific in time, which means our model can make wide exploration and achieve a balance between multiple objectives with a global overview.

In addition, we present some analysis for the two reward aggregation methods in terms of parameters update and value function approximation. If we attach the weights of \emph{Reward Combination} to the gradients in \emph{Reward Partition}, the parameters updating strategy should be identical on average. For \emph{Reward Combination}, global shared actor parameters $\theta$ is updated by
\begin{equation*}
	\theta \leftarrow \theta + \eta_{actor}\sum_{t}\left( \left(\sum_{k=1}^{K}w_k\times R_k(s_t)-V_k(s_t)\right)\times \nabla_\theta\log\pi_\theta(a_t\mid s_t)\right),
\end{equation*}
while in \emph{Reward Partition}, the expected global gradient is given as
\begin{equation*}
	\theta \leftarrow \theta +
	\eta_{actor}\sum_{t}\left(\left(\sum_{k=1}^{K}(w_k\times R_k(s_t)-w_k\times V_k(s_t))\right)\times \nabla_\theta\log\pi_\theta(a_t\mid s_t)\right).
\end{equation*}
The difference between the two reward aggregating methods lies in the advantage part. Thus the effect of parameter updates heavily depends on how well and precisely the critic can learn from its reward. By learning in a decomposed manner, the proposed \emph{Reward Partition} advances the \emph{Reward Combination} by using easy-to-learn functions to approximate single rewards, thus yielding a better policy.

\subsection{Optimzation and Training Procedure}\label{sub:agg}
In the framework of \method, the policy network explores continuous action space and outputs action distribution for inference. Therefore, loss for a single actor-critic worker (objective-$k$) is gathered from actor $\theta$, critic $\theta_v$, and action distribution entropy $H$ to improve exploration by discouraging premature convergence to sub-optimal~\cite{mnih2016asynchronous},
\begin{equation}
	L_{\theta, \theta_v} = \eta_{actor} E[R(\tau)] + \eta_{critic} \sum_{s_{t} \in \tau}\left\|V_{\theta_{v}}\left(s_{t}\right)-R(s_t)\right\|^{2} + \beta \sum_{s_{t} \in \tau} H(\pi(s_t)),
\end{equation}
where $\beta$ represents the strength of entropy regularization. 

After one iteration (e.g., 10-minute bidding simulation), we compute gradients for each actor-critic network and push the weighted gradients to the global network. With multiple actor-learners applying online updates in parallel, the global network could explore to achieve a robust balance between multiple objectives. The training procedure of \method is shown in Algorithm~\ref{algo:MoTiAC}. 

\begin{algorithm}[t]\small
	\SetAlgoLined
	// Assume global shared parameters $\theta$ and $\theta_v$\;
	// Assume objective-specific parameters $\theta_k'$ and $\theta_{v,k}'$, $k\in \{1, 2, \dots, K\}$\;
	Initialize step counter $t \leftarrow 1$; epoch $T$; discounted factor $\gamma$\;
	\While{$t<T_{max}$}{
		Reset gradients: $d\theta \leftarrow 0$ and $d\theta_v \leftarrow 0$ \;
		Synchronize specific parameters $\theta_k'=\theta$ and $\theta_{v,k}' = \theta_{v}$\;
		Get state $s_t$ extracted from user profile features and bidding environment\;
		// Assume ad set $\calA=\{ad_1, ad_2, ...\}$\;
		\For{$ad_j \in \calA$}{
			\Repeat{terminal state}{
				Determine bidding price $a_t$ according to policy $\pi(a_t\mid s_t;\theta_k')$\;
				Receive reward $r_t$ w.r.t \emph{objective k}\;
				Reach new state $s_{t+1}$\; 
				$t\leftarrow t+1$\;}
			\For{$n \in \{t-1, ..., 1\}$}{
				$r_n\leftarrow r_n + \gamma\times r_{n+1}$\;
				// Accumulative gradient w.r.t $\theta'_k$\;
				$d\theta'_k \leftarrow d\theta'_k + \eta_{actor}\sum(r_n-V(s_n;\theta'_{v,k}))\nabla_{\theta_k'}\log\pi(a_n|s_n)+\beta\sum \nabla_{\theta_k^{\prime}} H\left(\pi\left(a_n|s_n\right)\right)$\;
				// Accumulative gradient w.r.t $\theta'_{v,k}$\;
				$d\theta'_{v,k} \leftarrow d\theta'_{v,k} + \eta_{critic}\sum\partial\Vert r_n-V(s_n;\theta'_{v,k})\Vert^2 / \partial \theta'_{v,k}$\;
			}
			// Asynchronously update $\theta$ and $\theta_{v}$ with $d\theta_k'$ and $d\theta_{v,k}'$\;
			// Compute $w_k = p(\phi=k|\tau)$ w.r.t objective $k$\;
			$\theta \leftarrow \theta + w_k\times d\theta_k'$ and $\theta_{v} \leftarrow \theta_{v} + w_k\times d\theta_{v,k}'$\;
		}
	}
	\caption{Training for each actor-critic thread in \method}
	\label{algo:MoTiAC}
\end{algorithm}

\subsection{Convergence Analysis of \method}\label{sec:wac}
In this section, we use a toy demonstration to provide insights into the convergence property for the proposed \method. As illustrated in Fig.~\ref{fig:converence}, the solid black line is the gradient contour of \emph{objective 1}, and the black dash line is for \emph{objective 2}. The yellow area within their intersection is the area of the optimal strategy, where both advertisers and publishers satisfy with their benefits. Due to the highly dynamic environment of RTB \cite{cai2017real}, the optimal bidding strategy will change dramatically.
\begin{figure}
	\centering
	\captionsetup{labelfont=bf}
	\includegraphics[width=3.3in]{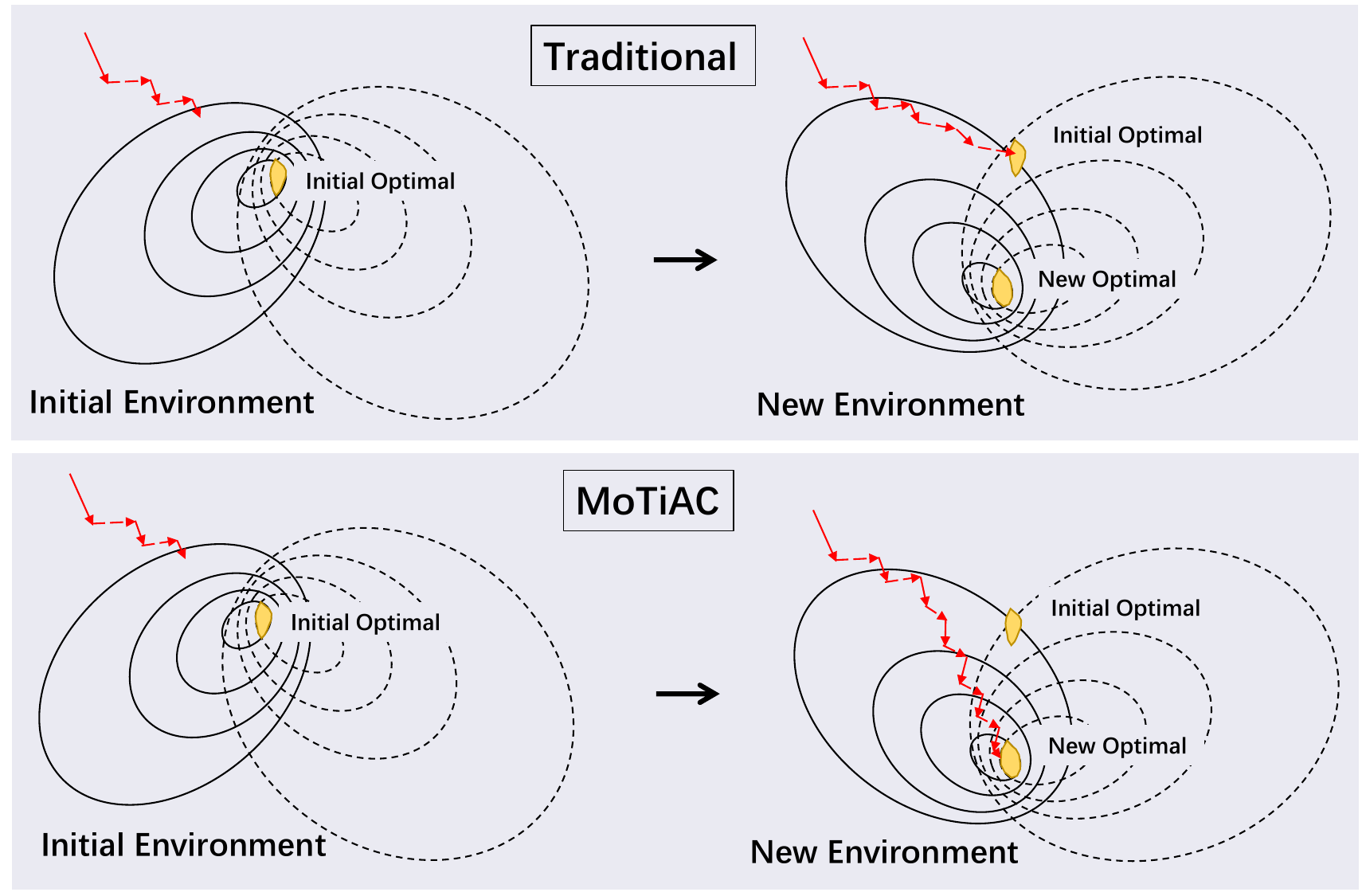}
	\caption{Convergence illustration of \method.}
	\label{fig:converence}
\end{figure}

Traditionally in a multi-objective setting, when people use linear combinations or other more complex transformations~\cite{lizotte2010efficient}, like policy votes~\cite{shelton2001balancing} of reward functions. They implicitly assume that the optimal solution is fixed, as shown in the upper part of Fig.~\ref{fig:converence}. Consequently, their models can only learn the \emph{initial optimal} and fail to characterize the dynamics. However, according to the dynamic environment in RTB, our \method adjusts the gradient w.r.t each possible situation towards a new optimal based on each objective separately and will easily be competent for real-world instability. Each gradient w.r.t the objectives forces the agent closer to the optimal for compensation rather than conflicts. Finally, the agent would reach the area of \emph{new optimal} and tunes its position in the micro-level, called convergence.

We further prove that the global policy will converge to the Pareto optimality between these objectives. The utility expectation of the objective $k$ is denoted as $E[U^k(\pi_{\theta})]$. We begin the analysis with Theorem~\ref{thm:pareto}~\cite{Pareto},

\begin{theorem}\label{thm:pareto}
	(Pareto Optimality). If $\pi^*$ is a Pareto optimal policy, then for any other policy $\pi$, one can at least find
	one k, so that $0<k\leq K$ and,
	\begin{equation}
		E[U^k(\pi^*)] \geq E[U^k(\pi)].
	\end{equation}
\end{theorem}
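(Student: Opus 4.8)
The plan is to prove this directly by contradiction from the definition of Pareto optimality, so the argument will be short. First I would make the setting explicit: identify each policy with its objective vector $\big(E[U^1(\pi)],\dots,E[U^K(\pi)]\big)$, and recall that $\pi^*$ being Pareto optimal means there is no policy $\pi$ in the same admissible class that \emph{dominates} it, i.e.\ no $\pi$ with $E[U^k(\pi)]\ge E[U^k(\pi^*)]$ for all $k=1,\dots,K$ and $E[U^{k_0}(\pi)]>E[U^{k_0}(\pi^*)]$ for at least one $k_0$. The claim to establish is that for every competing policy $\pi$ there is at least one index $k$ with $E[U^k(\pi^*)]\ge E[U^k(\pi)]$.

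Next I would negate the conclusion. Suppose there were a policy $\pi$ for which $E[U^k(\pi^*)] < E[U^k(\pi)]$ holds for \emph{every} $k$. Then in particular $E[U^k(\pi)]\ge E[U^k(\pi^*)]$ for all $k$, with strict inequality for (indeed all, hence) at least one index, so $\pi$ dominates $\pi^*$ in the sense above. This contradicts the assumed Pareto optimality of $\pi^*$. Hence no such $\pi$ exists, which is exactly the assertion of the theorem: for any other policy $\pi$ one can find a $k$ with $0<k\le K$ and $E[U^k(\pi^*)]\ge E[U^k(\pi)]$.

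The step needing care is conceptual rather than technical: one must keep ``for any other policy $\pi$'' ranging over the same policy class in which $\pi^*$ was declared Pareto optimal, and fix the weak-domination convention up front so the contrapositive goes through cleanly. There is no real analytic obstacle here — the genuine difficulty of the section lies downstream, in showing that the policy MoTiAC actually converges to is itself Pareto optimal. That would require analysing the asynchronous updates under the time-varying Bayesian weights $w_k=p(\phi=k\mid\tau_t)$ from \eqref{eq:phi}, arguing that the limiting scalarization $\sum_k w_k U^k$ carries strictly positive weights, and invoking the standard fact that a maximizer of a positively-weighted scalarization is Pareto optimal. For the statement exactly as written, however, the contradiction argument above is all that is needed.
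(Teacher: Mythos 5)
Your contradiction argument is correct and complete for the statement as written: negating the conclusion gives a policy $\pi$ with $E[U^k(\pi)] > E[U^k(\pi^*)]$ for every $k$, which dominates $\pi^*$ under any standard (weak or strict) domination convention and contradicts Pareto optimality. In fact, Theorem~\ref{thm:pareto} is essentially a restatement of the definition of Pareto optimality (non-dominance), and the paper treats it exactly that way: it supplies no proof of Theorem~\ref{thm:pareto} at all. The only proof environment in the section is attached to Theorem~\ref{thm:pareto2}, and even that argument is not a proof of the stated ``iff'' characterization; it is a gradient computation showing that MoTiAC's aggregated update, with weights $p(\phi=k|\tau_t)$, equals the gradient of the scalarization $\sum_k p(\phi=k)\,E_{\tau_t}[U^k(\tau_t;\pi_\theta)]$, from which Pareto optimality of the limit is inferred via the scalarization characterization. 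So your proof supplies the (trivial) step the paper leaves implicit, and your closing remark is exactly on target: the genuine mathematical content lies downstream, in justifying that the asynchronous updates with time-varying Bayesian weights converge to a maximizer of a positively-weighted scalarization (strict positivity of the limiting weights, stationarity of the weights during the limit argument, etc.), which the paper's own proof sketches rather than establishes rigorously. Nothing further is needed for the statement you were asked to prove.
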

The multi-objective setting assumes that the possible policy set $\Pi$ spans a convex space ($K$-simplices). The optimal policy of any affine interpolation of objective utility will be also optimal~\cite{critch2017toward}. We restate in Theorem~\ref{thm:pareto2} by only considering the non-negative region.
\begin{theorem}\label{thm:pareto2}
	$\pi^*$ is Pareto optimal iff there exits $\{l_k>0: \sum_kl_k=1\}$ such that,
	\begin{equation}
		\pi^* \in \underset{\pi}{\operatorname{arg\,max}}~\left[\sum_kl_kE[U^k(\pi)]\right]. \label{eq:pareto2}
	\end{equation}
\end{theorem}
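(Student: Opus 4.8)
The plan is to split the biconditional into the easy sufficiency direction and the delicate necessity direction, and to carry out the latter by a supporting–hyperplane argument in the $K$-dimensional utility space, leaning on the standing assumption that $\Pi$ spans a convex ($K$-simplex) space so that the set of achievable utility vectors is convex. Throughout I use that each $\pi\mapsto E[U^k(\pi)]$ is affine in the mixing weights of policies (policies may be randomized), that the simplex $\Pi$ is compact, and that the $E[U^k]$ are continuous, so every weighted-sum problem in \eqref{eq:pareto2} attains its maximum.

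For the direction ``weights $\Rightarrow$ Pareto optimal'' I would argue by contradiction in two lines: if $\pi^*$ maximizes $\sum_k l_k E[U^k(\pi)]$ with all $l_k>0$ but is not Pareto optimal, then some $\pi'$ satisfies $E[U^k(\pi')]\ge E[U^k(\pi^*)]$ for all $k$ with strict inequality for some index; multiplying by $l_k>0$ and summing gives $\sum_k l_k E[U^k(\pi')]>\sum_k l_k E[U^k(\pi^*)]$, contradicting optimality of $\pi^*$. This uses nothing but strict positivity of the weights.

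For the converse, ``Pareto optimal $\Rightarrow$ weights'', I would pass to the achievable set $\mathcal{U}=\{(E[U^1(\pi)],\dots,E[U^K(\pi)]):\pi\in\Pi\}$ and its downward closure $\mathcal{D}=\mathcal{U}-\mathbb{R}_{\ge0}^K$, both convex because $\mathcal{U}$ is the image of a simplex under an affine map. Writing $u^*$ for the utility vector of $\pi^*$, Theorem~\ref{thm:pareto} says nothing in $\mathcal{U}$ strictly dominates $u^*$, so $(u^*+\mathbb{R}_{>0}^K)\cap\mathcal{D}=\emptyset$ and $u^*$ lies on the boundary of the convex set $\mathcal{D}$. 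The supporting hyperplane theorem then yields a nonzero $l=(l_1,\dots,l_K)$ with $\langle l,u\rangle\le\langle l,u^*\rangle$ for all $u\in\mathcal{D}$; testing this on $u=u^*-t e_k$ as $t\to\infty$ forces $l_k\ge0$ for every $k$, and since $l\neq0$ we may normalize so that $\sum_k l_k=1$. Restricting the inequality back to $u\in\mathcal{U}$ is precisely $\pi^*\in\operatorname{arg\,max}_\pi\sum_k l_k E[U^k(\pi)]$, which, combined with the sufficiency direction, also gives the ``iff''.

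The main obstacle is upgrading $l_k\ge0$ to the strict $l_k>0$ that the statement demands: a supporting normal at a Pareto point can have vanishing components (the weak-Pareto phenomenon), so this step cannot hold in full generality. Here I would use exactly the restriction the paper flags before the theorem — working ``only in the non-negative region'' of utility space and taking $\pi^*$ to be a proper Pareto point (equivalently, $u^*$ not on a coordinate face of $\mathcal{U}$, which holds whenever each objective is non-degenerate along $\mathcal{U}$). Under that regularity I would perturb the normal: if some $l_j=0$, replace $l$ by $l+\varepsilon\mathbf{1}$; using non-negativity and compactness of $\mathcal{U}$ in this region one shows that for small $\varepsilon$ the perturbed vector still supports $\mathcal{D}$ at $u^*$, contradicting properness unless all components were already positive. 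Alternatively one can invoke the affine-interpolation characterization of Pareto optima on the simplex from \cite{critch2017toward}. I expect this positivity upgrade, together with making the convexity of $\mathcal{U}$ fully rigorous (it hinges on policies being mixable so that the $E[U^k]$ are affine), to be the only genuinely subtle points; the separation argument itself is routine.
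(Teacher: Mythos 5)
Your proposal is attacking a different task from what the paper's proof actually does, so be aware of the mismatch: the paper never proves the ``iff'' of Theorem~\ref{thm:pareto2} at all. It restates the scalarization characterization from \cite{critch2017toward} under the assumption that the policy set spans a convex space, and the text inside its proof environment establishes a separate claim: that the expected MoTiAC update, after inserting the posterior weights of Eqn.~\eqref{eq:phi}, aggregates into $\nabla_{\theta}\bigl[\sum_k p(\phi=k)E_{\tau_t}[U^k(\tau_t;\pi_\theta)]\bigr]$, i.e.\ gradient ascent on one fixed affine combination with $l_k=p(\phi=k)$, from which Pareto optimality of the limit is concluded by invoking the (restated) theorem. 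You instead try to prove the characterization itself by convex separation. Your sufficiency direction (positive weights $\Rightarrow$ Pareto optimal) is correct and complete, and your reduction to the achievable set $\mathcal{U}$ and its downward closure $\mathcal{D}$, with the supporting-hyperplane step yielding $l_k\ge 0$, is also sound given the paper's convexity assumption.

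The genuine gap is exactly where you flag it, and your proposed patch does not close it. The ``Pareto optimal $\Rightarrow$ strictly positive weights'' direction is false for a general convex $\mathcal{U}$: take $\mathcal{U}=\{u\in\mathbb{R}^2: u_1^2+u_2^2\le 1\}$; the point $(1,0)$ is Pareto optimal, but the only supporting weight vector is $(1,0)$, and for any $l$ with $l_2>0$ the maximizer of $\langle l,\cdot\rangle$ is the boundary point in direction $l$, not $(1,0)$. This same example kills the perturbation step: with $l=(1,0)$ supporting $\mathcal{D}$ at $u^*=(1,0)$, the vector $l+\varepsilon\mathbf{1}$ fails to support $\mathcal{D}$ at $u^*$ for every $\varepsilon>0$, and the argument's logic is also off --- if the perturbed vector did support at $u^*$ you would simply be done, not ``contradict properness.'' Nor does the paper's phrase about the non-negative region help: it refers to restricting the interpolation coefficients $l_k$ to the simplex, not to a regularity condition on $\mathcal{U}$ that excludes such boundary points. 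To make the necessity direction true you must either (i) weaken the statement to properly Pareto optimal policies and invoke Geoffrion's characterization, or (ii) exploit the structure the paper implicitly assumes --- $\mathcal{U}$ is the affine image of a finite-dimensional simplex, hence a polytope --- in which case every efficient point is properly efficient and the strictly-positive-weight characterization is the classical multiobjective linear programming result (Isermann); proving that requires LP duality or an averaging argument over the normals of the facets containing the optimal face, not a single supporting hyperplane plus an $\varepsilon\mathbf{1}$ perturbation.
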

\begin{proof}
	We derive the gradient by aggregating Eqn.~\eqref{eq:sgd2} as,
	\begin{equation}
		\begin{aligned}
			\nabla &= \sum_{\tau_t}\sum_{k}p(\phi=k|\tau_t)\nabla_{\theta}U^k(\tau_t; \pi_{\theta})
			\propto \sum_{k}p(\phi=k)\sum_{\tau_t}p(\tau_t|\phi=k)\nabla_{\theta}U^k(\tau_t; \pi_{\theta}) \\
			& =\sum_{k}p(\phi=k)\nabla_{\theta}E_{\tau_t}[U^k(\tau_t; \pi_{\theta})]
			=\nabla_{\theta}\left[ \sum_{k}p(\phi=k)E_{\tau_t}[U^k(\tau_t; \pi_{\theta})]\right].
		\end{aligned}
	\end{equation}
	By making $l_k=p(\phi=k)$ (Note that $\sum_kp(\phi=k)=1$), we find that the overall gradient conform with the definition of Pareto optimality in Eqn.~\eqref{eq:pareto2}.  Therefore, we conclude that \method
	converges to Pareto optimal, indicating that it can naturally balance different objectives.
\end{proof}

\section{Experiments} \label{sec:exp}
In the experiment, we use real-world industrial data to answer the following three research questions:
\begin{itemize}
	\item \textbf{RQ1:} How does \method perform compared with other baseline methods?
	\item \textbf{RQ2:} What is the best way to aggregate multiple objectives?
	\item \textbf{RQ3:} How does \method balance the exploration of different objectives?
\end{itemize}

\begin{table}[t]
	\centering
	\captionsetup{labelfont=bf}
	\begin{tabular}{c|cccc}
		\toprule
		\quad \textbf{Date}                  \quad   \quad                  & \quad\textbf{ \# of Ads}  \quad           & \quad\textbf{\# of clicks}  \quad    & \quad\textbf{\# of conversions}        \quad         \\
		\midrule
		\quad	\textbf{20190107}     \quad  \quad    & \quad 10,201    \quad              & \quad176,523,089   \quad                & \quad3,886,155   \quad                  \\
		\quad	\textbf{20190108} \quad \quad& \quad{10,416 }\quad & \quad{165,676,734}\quad & \quad{3,661,060} \quad\\
		\quad	\textbf{20190109} \quad  \quad                                     & \quad10,251    \quad             & \quad178,150,666 \quad                  & \quad3,656,714 \quad           \\
		\quad	\textbf{20190110}  \quad \quad                                      & \quad9,445  \quad              & \quad157,084,102   \quad                  &\quad 3,287,254  \quad                  \\
		\quad	\textbf{20190111}   \quad  \quad                                  & \quad10,035 \quad               &\quad 181,868,321   \quad                  &\quad 3,768,247 \quad        \\  
		\bottomrule
	\end{tabular}
	\vspace{1mm}
	\caption{Statistics of click data from Tencent bidding system.}
	\label{tb:statistics}
\end{table}

\subsection{Experiment Setup}\label{exp:setup}
\textbf{Dataset.} In the experiment, the dataset is collected from the real-time commercial ads bidding system of Tencent. There are nearly 10,000 ads daily with a huge volume of click and conversion logs. According to real-world business, the bidding interval is set to be 10 minutes (144 bidding sessions for a day), which is much shorter than one hour~\cite{jin2018real}. Basic statistics can be found in Table~\ref{tb:statistics}. 

\noindent\textbf{Compared Baselines.} 
We carefully select related methods for comparison and adopt the same settings for all the compared methods with 200 iterations. Details about implementation can be seen in Appendix~\ref{app:add2}.
\begin{itemize}
	\item \textbf{Proportional-Integral-Derivative (PID):}~\cite{bennett1993development} is a widely used feedback control policy, which produces the control signal from a linear combination of proportional, integral, and derivative factors. 
	\item \textbf{Advantage Actor-Critic (A2C):}~\cite{mnih2016asynchronous} makes the training process more stable by introducing an advantage function. ~\cite{jin2018real} generalizes the actor-critic structure in the RTB setting. 
	\item \textbf{Deep Q-Network (DQN):}~\cite{wang2017ladder} uses DQN with a single objective under the assumption of consistent state transition in the RTB problem, while the similar structure can also be coupled with a dynamic programming approrach~\cite{cai2017real}. 
	\item \textbf{Aggregated A3C (Agg-A3C):} Agg-A3C~\cite{mnih2016asynchronous} is proposed to disrupt the correlation of training data by introducing an asynchronous update mechanism. 
\end{itemize}
We linearly combine multiple rewards (following \emph{Reward Combination}) for all the baselines. Besides, we adopt two variants of our model: \emph{Objective1-A3C (O1-A3C)} and \emph{Objective2-A3C (O2-A3C)}, by only considering one of the objectives. We use four days of data for training and another day for testing and then use the cross-validation strategy on the training set for hyper-parameter selection. Similar settings can be found in literature~\cite{wang2017ladder,zhu2017optimized}.

\noindent\textbf{Evaluation Metrics.} We clarify the objectives of our problem based on the collected data. In Sec.~\ref{sec:aac}, we claim that our two objectives are: (1) \emph{minimize overall CPA}; (2) \emph{maximize conversions}. We refer to the industrial convention and redefine our goals in the experiments. \emph{Revenue} is a common indicator for platform earnings, which turns out to be proportional to conversions. \emph{Cost} is the money paid by advertisers, which also appears to be a widely accepted factor in online advertising. Therefore, without loss of generality, we reclaim our two objectives to be: 
\begin{equation}
	\text{Revenue}^{(j)}=\text {conversions }^{(j)}\times \text{CPA}_{\text{target}}^{(j)},\quad \text{Cost}^{(j)}=\text{\#clicks}^{(j)} \times \text{CPC}_{\text {next }}^{(j)}, 
\end{equation}
\begin{equation}
	\max \textbf{ROI} \leftarrow \max \sum_{Ad_{j}\in A} \frac{\text{Revenue}^{(j)}}{\text{Cost}^{(j)}}, 
\end{equation}
which corresponds to the first objective: \emph{CPA goal}, and  
\begin{equation}
	\max \textbf{Revenue}\leftarrow \max\sum_{Ad_{j}\in A} \text{Revenue}^{(j)}, 
\end{equation}
related to the second objective: \emph{Conversion goal}. 

For the two variants of \method, O1-A3C corresponds to maximizing ROI, while O2-A3C is related to maximizing Revenue. In addition to directly comparing these two metrics, we also use \emph{R-score} proposed in~\cite{lu} to evaluate the model performance. The higher the \emph{R-score}, the more satisfactory the advertisers and platform will be. In the real-world online ad system, PID is currently used to control bidding. We employ it as a standard baseline, and most of the comparison results will be based on PID, i.e., $value \rightarrow \frac{value}{value_{PID}}$, except for Sec.~\ref{sub:q3}.

\begin{table}[t!]
	\centering
	\captionsetup{labelfont=bf}
	\begin{tabular}{c|cccc}
		\toprule
		\quad \textbf{Model}\quad \quad&
		\quad\textbf{Relative Cost}\quad &
		\quad\textbf{Relative ROI}\quad &
		\quad\textbf{Relative Revenue}\quad &
		\quad\textbf{R-score}\quad \\
		\midrule
		\quad\textbf{PID} \quad\quad& \quad 1.0000 \quad & \quad{1.0000}\quad & \quad 1.0000 \quad& \quad1.0000\quad\\
		\quad\textbf{A2C}\quad\quad &  \quad 1.0366 (+3.66\%) \quad &  \quad0.9665 (-3.35\%) \quad&   \quad1.0019 (+0.19\%) \quad& \quad 0.9742\quad \\
		\quad\textbf{DQN}\quad\quad &  \quad  0.9765 (-2.35\%) \quad &  \quad1.0076 (+0.76\%) \quad& \quad0.9840 (-1.60\%) \quad& \quad 0.9966\quad\\
		\quad\textbf{Agg-A3C}\quad\quad &  \quad  1.0952 (+9.52\%) \quad & \quad0.9802 (-1.98\%) \quad &\quad 1.0625 (+6.25\%) \quad& \quad 0.9929\quad\\
		\midrule
		\quad\textbf{O1-A3C} \quad\quad&  \quad  0.9580 (-4.20\%) \quad&  \quad1.0170 (+1.70\%)\quad & \quad 0.9744 (-2.56\%)\quad& \quad 1.0070\quad\\
		\quad\textbf{O2-A3C} \quad\quad&  \quad  1.0891 (+8.91\%) \quad&  \quad0.9774 (-2.26\%)\quad & \quad 1.0645 (+6.45\%) \quad& \quad 0.9893\quad\\
		\midrule
		\quad\method \quad\quad&  \quad  1.0150 (+1.50\%) \quad&  \quad1.0267 (+2.67\%)\quad & \quad 1.0421  (+4.21\%) \quad & \quad 1.0203\quad\\
		\bottomrule
	\end{tabular}
	\vspace{2mm}
	\caption{Comparative results based on PID.}
	\label{tb:general}
\end{table}

\subsection{RQ1: Comparison with Recent Baselines}
We perform the comparison of \method with other approaches. The results are shown in Table~\ref{tb:general}. The values in the parentheses represent the percentage of improvement or reduction towards PID. An optimal method is expected to improve both metrics (ROI \& Revenue) compared with the current PID baseline.

\subsubsection{Objective Comparison.}
We find that \method best balances the trade-off between two objectives (ROI \& Revenue) based on the above considerations. Also, it has the highest R-score. Specifically, A2C is the worst since it gains a similar revenue (conversion goal) but a much lower ROI (CPA goal) than PID. The result proves that the A2C structure cannot fully capture the dynamics in the RTB environment. Based on a hybrid reward, DQN has a similar performance as O1-A3C, with relatively fewer conversions than other methods. We suspect the discrete action space may limit the policy to a local and unstable optimal. By solely applying the weighted sum in a standard A3C (Agg-A3C), the poor result towards ROI is not surprising. As the RTB environment varies continuously, fixing the formula of reward aggregation cannot capture the dynamic changes. It should be pointed out that two ablation models, O1-A3C and O2-A3C, present two extreme situations. O1-A3C performs well in the first ROI objective but performs poorly for the Revenue goal and vice versa for O2-A3C. By shifting the priority of different objectives over time, our proposed \method uses the agent's prior as a reference to make the decision in the future, precisely capturing the dynamics of the RTB sequence. Therefore, it outperforms all the other baselines.

Comparing \emph{Reward Partition} and \emph{Reward Combination}, the advantages of \method over other baselines show that our proposed method of accumulating rewards overall reduces the difficulty of agent learning and makes it easier for the policy network to converge around the optimal value.

\subsubsection{Bidding Quality Analysis.}
To further verify the superiority of \method compared to other methods, we analyze the relative bidding quality of these methods over PID. We group all the ads into five categories based on their bidding results. The detailed evaluation metrics can be found in Appendix~\ref{app:metric}.
\begin{figure}[t]
	\captionsetup{labelfont=bf}
	\centering
	\includegraphics[width=3.3in]{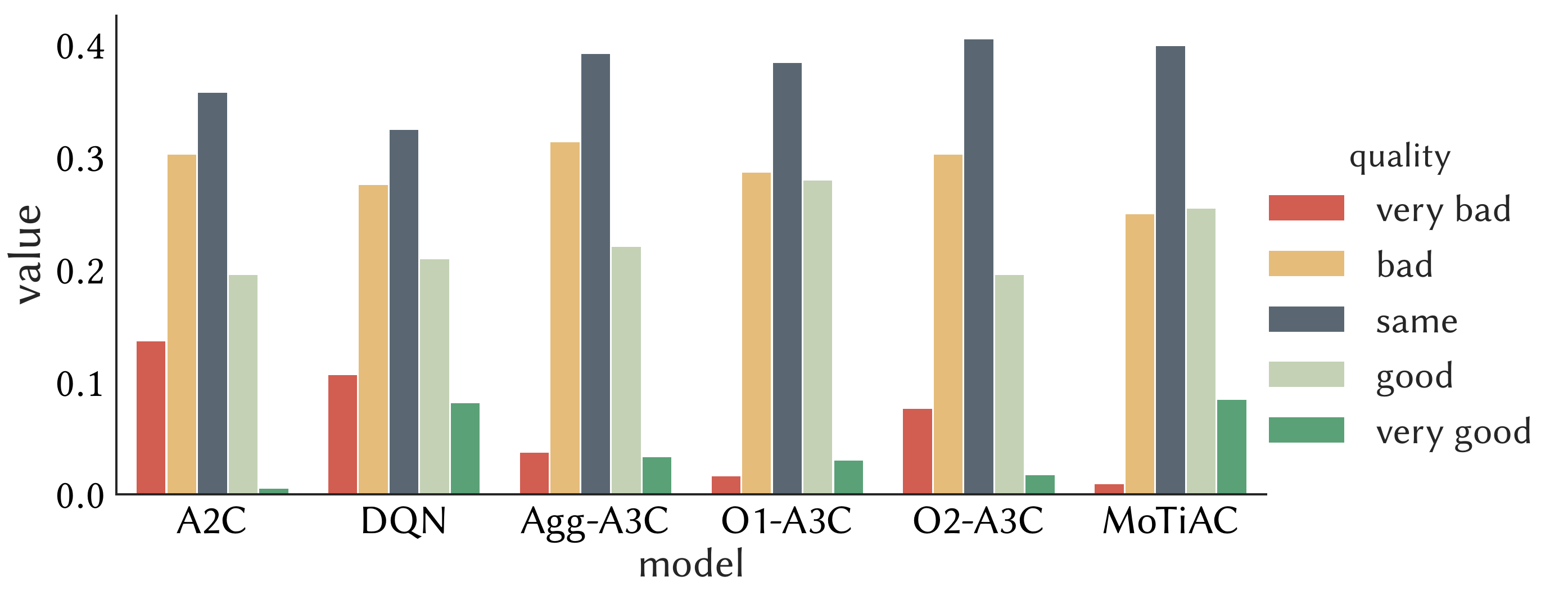}
	\caption{Bidding quality distribution of compared methods over PID.}
	\label{fig:quality}
\end{figure}
As shown in Fig.~\ref{fig:quality}, both A2C and O2-A3C present more bad results compared than good ones, indicating that these two models could not provide a gain for the existing bidding system at a finer granularity. O1-A3C has a relatively similar performance as PID, as they both aims at minimizing real \emph{CPA}. We also find that DQN tends to make bidding towards either very good or very bad, once again demonstrating the instability of the method. Agg-A3C shares the same distribution pattern with O1-A3C and vanilla PID, which indicates that the combined reward does not work in our scenario. The proposed \method turns out to have a desirable improvement over PID with more ads on the right \emph{good} side and fewer ads on the left \emph{bad} side. Note that the negative transfer of multi-objective tasks makes some bidding results inevitably worse. However, we can still consider that \method can achieve the best balance among all the compared methods.

\begin{figure}[t]	
	\centering
	\captionsetup{labelfont=bf}
	\includegraphics[width=4.2in]{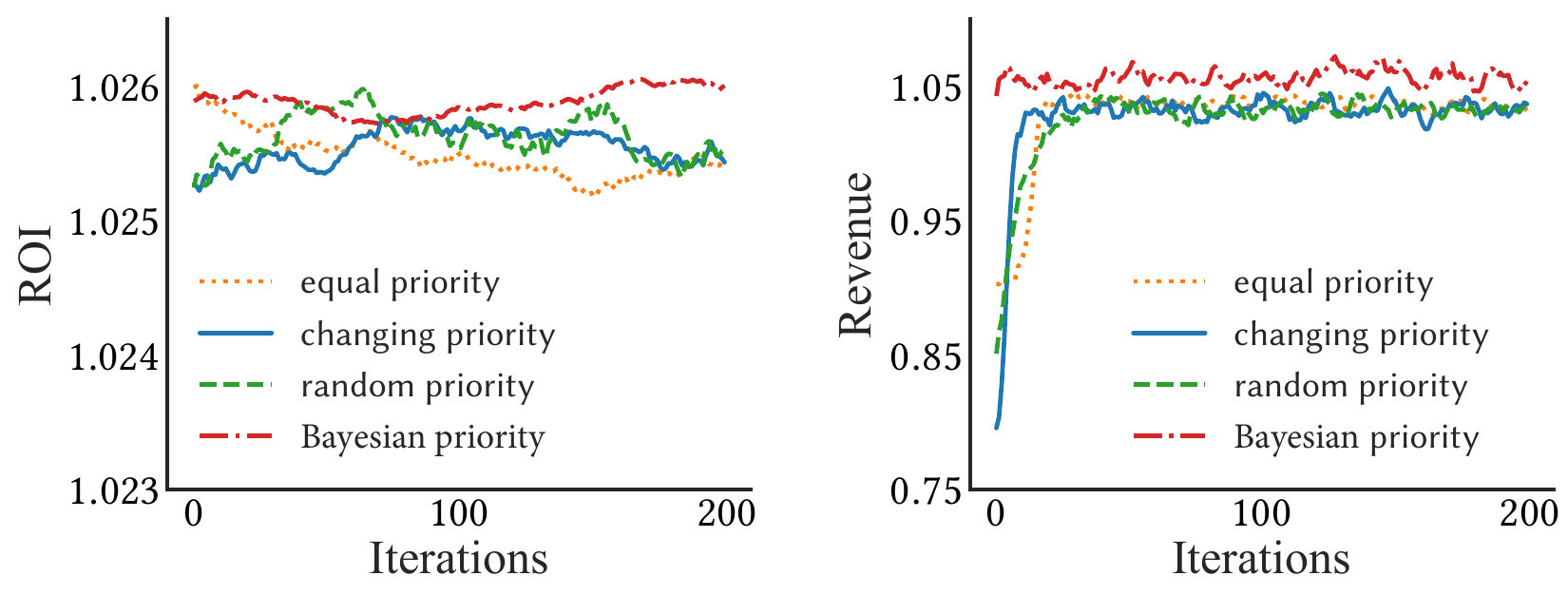}
	\caption{Results under different priority functions.}
	\label{fig:lambda}
\end{figure}

\subsection{RQ2: Variants of $\boldmath w_k$}\label{sub:q2}
To give a comprehensive view of \method, we perform different
ways to aggregate objectives. Four different variants of  $w_k$ are considered in the experiment. Since we have two objectives, we use $w_1(t)$ for the first objective and $1-w_1(t)$ for the second:
\begin{itemize}
	\item equal priority: $w_1(t) = \frac 12$;
	\item changing priority: $w_1(t)=
	\exp(-\alpha\cdot t)$ with a scalar $\alpha$;
	\item random priority: $w_1(t) = \text{random}([0,1])$;
	\item Bayesian
	priority: One can refer to Eqn.~\eqref{eq:phi}.
\end{itemize}
As shown in Fig.~\ref{fig:lambda}, we present the training curves for ROI and Revenue. The first three strategies are designed before training and will not adjust to the changing environment. It turns out that they perform similarly in both objectives and could gain a decent improvement over the PID case by around +2.5\% in ROI and +3\% in Revenue. However, in \emph{equal priority}, the curve of ROI generally drops when the iteration goes up, which stems from the fact that fixed equal weights cannot fit the dynamic environment. For \emph{changing priority}, it is interesting that ROI first increases then decreases for priority shifting, as different priority leads to different optimal. In \emph{random priority}, curves dramatically change in a small range since the priority function outputs the weight randomly. The \emph{Bayesian priority} case, on the contrary, sets priority based on the conformity of the agent's prior and current state. Reward partition with agent prior dominates the first three strategies by an increasingly higher ROI achievement by  +2.7\% and better Revenue by around +4.2\%.

\subsection{RQ3: Case Study}\label{sub:q3}

\begin{figure}[t]
	\centering
	\captionsetup{labelfont=bf}
	\includegraphics[width=3.7in]{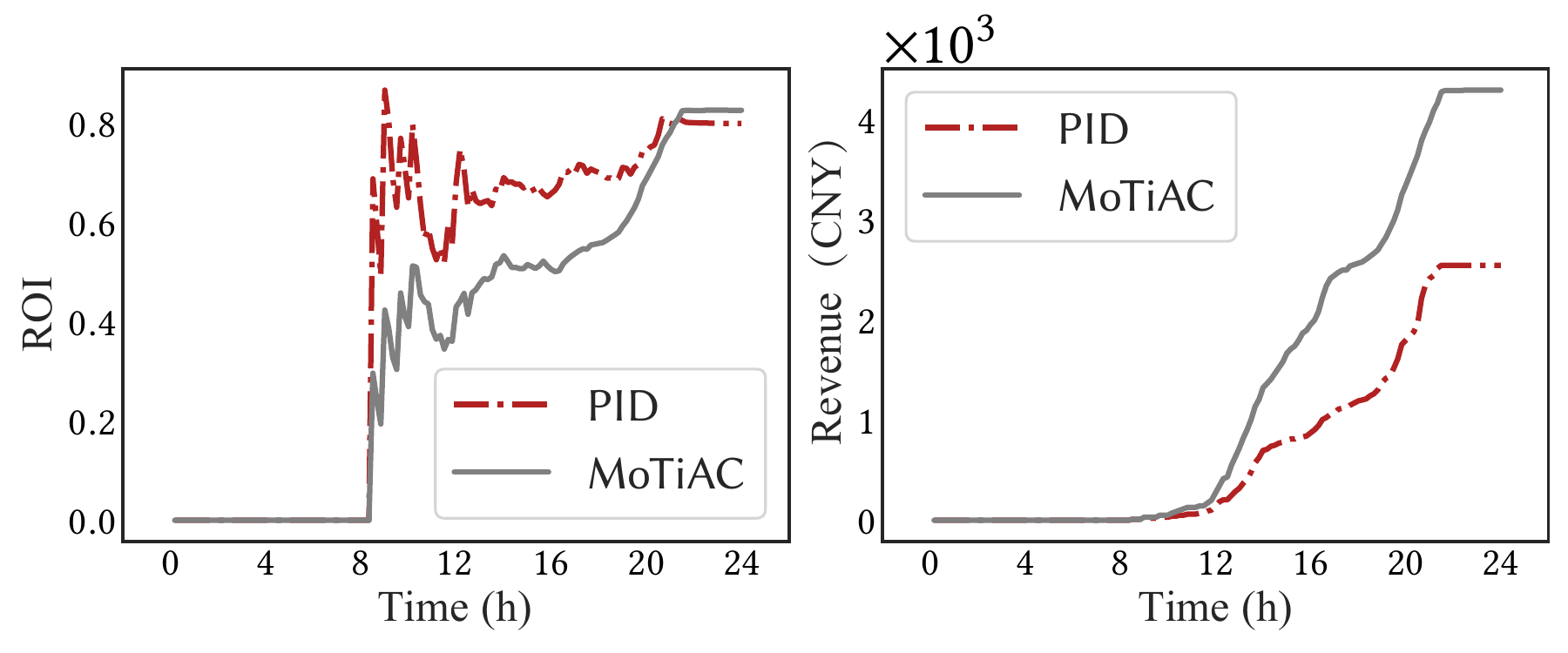}
	\caption{ROI and Revenue curves of the target ad's reponse.}
	\label{fig:caseStudy2}
\end{figure}

\begin{table}[t]
	\centering
	\captionsetup{labelfont=bf}
	\begin{tabular}{|c|ccc|}
		\toprule
		\quad \textbf{Models} \quad & \quad\textbf{Revenue (CNY)} \quad& \quad\textbf{Cost (CNY)} \quad& \quad\textbf{ROI} \quad\\
		\midrule
		\quad\textbf{PID} \quad\quad&\quad $3.184\times 10^3$\quad & \quad$2.548\times 10^3$ \quad & \quad0.8003 \quad\\
		\quad\method \quad\quad& \quad$4.298\times 10^3$ \quad& \quad$5.199\times 10^3$ \quad & \quad0.8267\quad \\
		\bottomrule
	\end{tabular}
	\vspace{2mm}
	\caption{Numerical results of the target ad using PID and \method.}
	\label{tb:twoAds}
\end{table}
In this section, we try to investigate how \method balances the exploration of multiple objectives and achieves the optimal globally. We choose one typical ad with large conversions and show the bidding process within 24 hours. As PID is the current model in the real ad system, we use PID to compare with \method and draw the results of ROI and Revenue curve in Fig.~\ref{fig:caseStudy2}. We also collect the final numerical results in Table~\ref{tb:twoAds}. 

Fig.~\ref{fig:caseStudy2} shows a pretty low ROI initially. For the target ad, both models first try to lift the ROI. Based on the figure presented on the left, the red dashed curve rises from 0 to about 0.7 sharply for PID at 8h. The potential process should be that PID has given up most of the bid chances and only concentrates on those with a high conversion rate (CVR) so that we have witnessed a low Revenue gain of the PID model in the right figure from 8h to around 21h. Though the ROI curve remains relatively low, our \method can select good impression-level chances while considering the other objective. At 24h, \method finally surpasses PID in ROI because of the high volume of pre-gained Revenue. With long-term consideration, \method beats PID on both the cumulative ROI and Revenue. We can conclude that PID is greedy out of the immediate feedback mechanism. It is always concerned with the current situation and never considers further benefits. When the current state is under control, PID will appear conservative and give a shortsighted strategy, resulting in a seemingly good ROI and poor Revenue (like the red curve in Fig.~\ref{fig:caseStudy2}). However, \method has a better overall view. It foresees the long-run benefit and will keep exploration even temporarily deviating from the right direction or slowing down the rising pace (ROI curve for the target ad at 8h). Under a global overview, \method can finally reach better ROI and Revenue than PID.

\section{Related Work} \label{sec:rw}

\vspace{0.5mm}
\textbf{Real-time Bidding.} Researchers have proposed static methods~\cite{Perlich2012} for optimal biddings, such as constraint optimization~\cite{zhang2014optimal}, to perform an impression-level evaluation. However, traditional methods inevitably ignore that real-world situations in RTB are often dynamic~\cite{wu2018budget} due to the unpredictability of user behavior~\cite{jin2018real} and different marketing plans~\cite{xu2015smart} from advertisers. Furthermore, the auction process of optimal bidding is formulated as a Markov decision process (MDP) in recent study~\cite{jin2018real,lu}. Considering the various goals of different players in RTB, a robust framework is required to balance these multiple objectives. Therefore, we are motivated to propose a novel multi-objective RL model to maximize the overall utility of RTB. 

\medskip
\noindent\textbf{Reinforcement Learning.} Significant achievements have been made by the emergence of RL algorithms, such as policy gradient \cite{sutton2000policy} and actor-critic~\cite{konda2000actor}. With the advancement of GPU and deep learning (DL), more successfully deep RL algorithms~\cite{lillicrap2015continuous,mnih2016asynchronous} have been proposed and applied to various domains. Meanwhile, there are previous attempts to address the multi-objective reinforcement learning (MORL) problem~\cite{Hayes2021}, where the objectives are combined mainly by static or adaptive linear weights~\cite{AbelsRLNS19,pasunuru2018multi} or captured by a set of policies and evolving preferences~\cite{pirotta2015multi}.

\section{Conclusion and Future Directions}
In this paper, we propose \textbf{M}ulti-\textbf{O}bjec\textbf{Ti}ve \textbf{A}ctor-\textbf{C}ritics for real-time bidding in display advertising. \method utilizes objective-aware actor-critics to solve the problem of multi-objective bidding optimization. Our model can follow adaptive strategies in a dynamic RTB environment and outputs the optimal bidding policy by learning priors from historical data. We conduct extensive experiments on the real-world industrial dataset. Empirical results show that \method achieves state-of-the-art on the Tencent advertising dataset. One future direction could be extending multi-objective solutions with priors in the multi-agent reinforcement learning area.

\bibliographystyle{splncs04}
\bibliography{ecml_motiac.bib}

\appendix
\newpage
\section{Supplementary Material}\label{add1}

\subsection{Pricing Evaluation Metrics}\label{app:metric}
We present our pricing evaluation metric below: Algorithm~\ref{algo:pricing} is the pricing evaluation strategy based on our business, and in
Algorithm~\ref{algo:score} we specify the \emph{Score} function.

\begin{algorithm}[htbp!]
	\SetAlgoLined
	// We use $Value_{method}$ to denote the value given by other approaches, and $Value_{PID}$ denotes the value given by PID\;
	// Pricing score is given per ad\;
	\For{$ad_j \in A$}{
		// $CPA$ calculation\;
		$\Delta cost = cost_{method} - cost_{PID}$\;
		$\Delta conversion = conversion_{method} - conversion_{PID}$\;
		$\Delta CPA = (\Delta cost+0.5\times CPA_{target})~/~ (\Delta conversion+0.5)$\;
		$CPA_{PID} = cost_{PID}~/~(conversion_{PID}+0.01) $\;
		$CPA_{method} = cost_{method}~/~(conversion_{method}+0.01) $\;
		// Bias calculation\;
		$\Delta bias = \Delta CPA~/~CPA_{target} - 1$\;
		$bias_{PID} = CPA_{PID}~/~CPA_{target} - 1$\;
		$bias_{method} = CPA_{method}~/~CPA_{target} - 1$\;
		// Enter the loop\;
		\uIf{$bias_{PID}\leq0.2$}{
			$\Delta loss = \Delta cost~/~cost_{PID}$\;
			$score = Score(\Delta loss)$\;
		}\Else{
			\If{$bias_{method}\leq 0.2 \bigwedge bias_{PID} \geq 0.3$}{
				$score = 2$\;
			}
			$\Delta loss = \Delta cost~/~cost_{PID}$\;
			\uIf{$\Delta bias \geq 0.3$}{
				$score = -Score(\Delta loss)$\;
			}\Else{
				$score = Score(\Delta loss)$\;
			}
		}
		Return score for $ad_j$\;
	}
	
	\caption{Pricing Evaluation over PID.}
	\label{algo:pricing}
\end{algorithm}

\begin{algorithm}[htbp!]
	\SetAlgoLined
	\DontPrintSemicolon
	\KwIn{$ \Delta loss $}
	\KwOut{$ score $ }
	
	\SetKwFunction{FMain}{\textrm{\textit{Score}}}
	\SetKwProg{Fn}{function}{:}{}
	\Fn{\FMain{$\Delta loss$}}{
		\uIf{$ \Delta loss \leq 0.1$}{Return 0 // similar performance\;}\uElseIf{$ \Delta loss \leq 0.2$}{Return 1 // good / bad performance\;}\Else{Return 2 // very good / bad performance\;}// sign will be given outside
	}
	\textbf{end}
	\caption{Score Function}
	\label{algo:score}
\end{algorithm}

\subsection{Implementation Details}\label{app:add2}
We shall present our implementation details for all baselines and our \method as follows:
\begin{itemize}
	\item \textbf{PID}: control signal is given by difference across two intervals in our experiment, in order to making sure less fluctuation and stable result. Ratio for proportional factor is 0.04 , for the integral factor is 0.25 and for the derivative factor is 0.002. 
	\item \textbf{A2C, Agg-A3C}: in A2C, we use two hidden layers with 256 and 128 dimensions. Output for \emph{Actor} is one-dimension scalar $\mu$ and $\sigma$ (Gaussian Distribution), and then actions will be sampled from the above-mentioned distribution with boundaries. Output for \emph{Critic} is one-dimension scalar. \emph{State} is a normalized 17-dimension feature vector. 
	Agg-A3C has the same architecture, but is on extension of A2C with 5 workers running in parallel and updates global net asynchronously. \item {\textbf{DQN}}: all settings in DQN are the same as in A2C and Agg-A3C, except for \emph{Actor}. The output of \emph{Actor} has 100 dimension w.r.t 100 values ranging from 0 to 1, and 0.01 as its interval. Then based on values we choose the best $r \in [0,1]$, and gives $r\times \text{CPA}_{target}$ as $\text{CPC}_{bid}$.
	\item {\textbf{O1-A3C, O2-A3C}}: all settings in O1-A3C and O2-A3C are the same as in A2C and Agg-A3C, except for reward. \emph{Reward} in O1-A3C and O2-A3C is set according to their objectives.
	\item {\textbf{\method}}: we use two hidden layers with 256 and 128 dimensions for all nets in \method. However, the difference is that we have two critics in global nets with the same structures. And for each update period, one group of workers can only push their gradients to actor and one of the critics. We set 5 worker as an objective group. Also, \method is implemented with multiprocessing and shared memory.
\end{itemize}
\end{document}